\newcommand{\todo}[1]{
  \textcolor{red}{\footnotesize \textsf{#1}}
}
\newcommand{\figheight}{1in}
\newtheorem{theorem}{Theorem}[section]
\newcommand{\CT}{\ensuremath{{\cal T}^C}\xspace}
\newcommand{\ignore}[1]{}
\title{\LARGE \bf
A Parallel Distributed Strategy for Arraying a Scattered Robot Swarm
}
\author{Dominik Krupke$^{1}$, Michael Hemmer$^{1}$, James McLurkin$^{2}$, Yu Zhou$^{2}$, and S{\'a}ndor P. Fekete$^{1}$
\thanks{$^{1}$S{\'a}ndor P. Fekete, Michael Hemmer, and Dominik Krupke are with the Department of Computer Science at TU Braunschweig, Braunschweig, Germany;
        {\tt\small s.fekete@tu-bs.de, mhsaar@gmail.com, d.krupke@tu-bs.de}}%
\thanks{$^{2}$James McLurkin and Yu Zhou are 
        with the Computer Science Department at
        Rice University, Houston, TX, USA;
        {\tt\small jmclurkin@rice.edu}}%
}
\begin{document}

\maketitle
\thispagestyle{empty}
\pagestyle{empty}

\begin{abstract}
We consider the problem of organizing a scattered group of $n$ robots 
in two-dimensional space, with geometric maximum distance $D$ between robots. 
The communication graph of the swarm is connected, but there is no central authority
for organizing it. We want to arrange them into a sorted and equally-spaced array between
the robots with lowest and highest label, while maintaining a connected
communication network.

In this paper, we describe a distributed method to accomplish these goals,
without using central control,
while also keeping time, travel distance and communication cost
at a minimum.  We proceed in a number of stages (leader election, initial path
construction, subtree contraction, geometric straightening, and distributed
sorting), none of which requires a central authority, but still accomplishes
best possible parallelization. The overall arraying is
performed in $O(n)$ time, $O(n^2)$ individual messages, and $O(nD)$ travel
distance. Implementation of the sorting and navigation use communication
messages of fixed size, and are a practical solution for large populations of
low-cost robots.
\end{abstract}

\section{Introduction and Related Work}
\label{sec:Introduction}

\begin{figure*}[bth]
\vspace*{2mm}
\centering
\includegraphics[width=\textwidth]{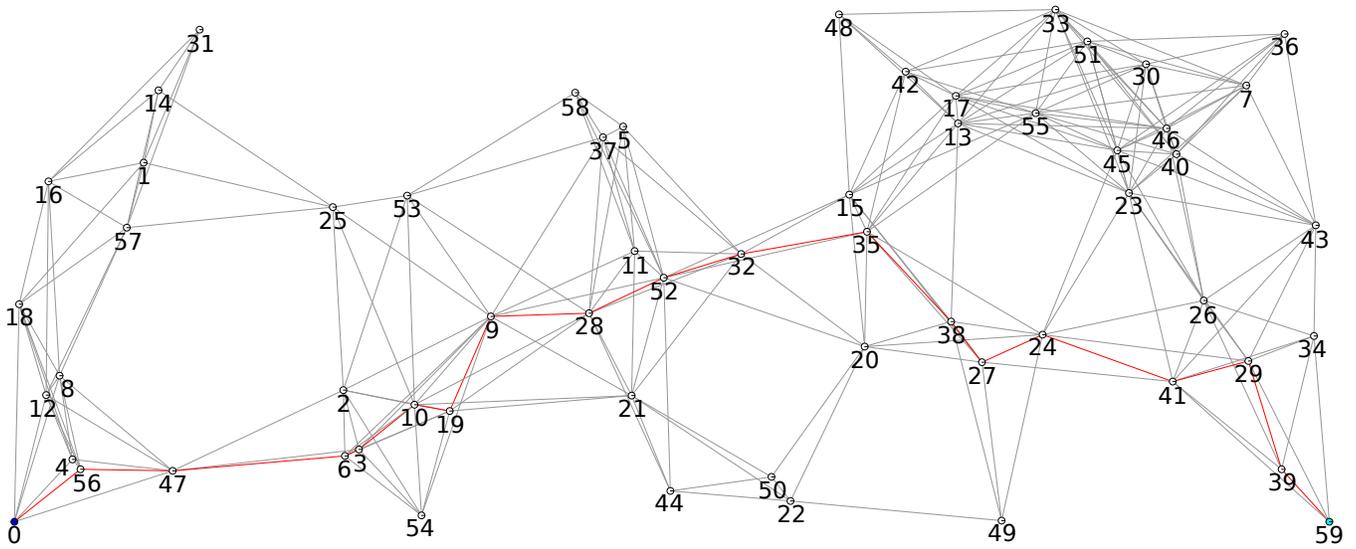}
\caption{
\label{fig:arraying-01}
An initial configuration of 60 robots. Edges of the graph $G$ are indicated in light gray. 
The central path (Section~\ref{subsec:initialpath}) from $r_{min}$ to $r_{max}$ is indicated in red. 
}
\end{figure*}

\begin{figure*}[bth]
\centering
\includegraphics[width=\textwidth]{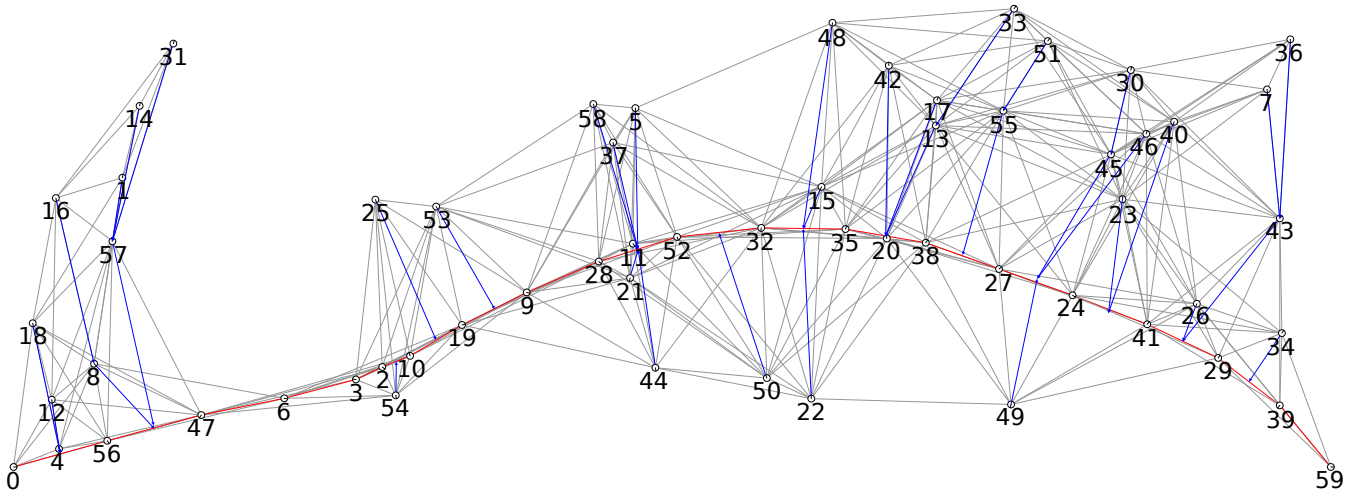}
\caption{
\label{fig:arraying-02}
A snapshot of the contraction phase of swarm already depicted in Fig.~\ref{fig:arraying-01}. 
Several robots have already moved onto the central path depicted in red. Robot paths for contraction are depicted in blue. 
It can also be observed that the contraction phase is intervened with the fourth phase as the central path has already straightened significantly.  
}
\end{figure*}

\begin{figure*}[bth]
\centering
\includegraphics[width=\textwidth]{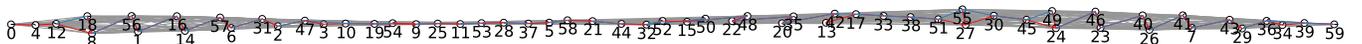}
\caption{
\label{fig:arraying-03}
A snapshot of the wave sort phase. After the all robots have integrated into the central path during the contraction phase and as soon as the path is considered straight enough $r_{min}$ initializes sorting waves that propagate through the chain of robots. 
}
\end{figure*}

Consider a large company of $n$ robots after deployment. They are scattered
across a geometric region. While the swarm is still connected in terms of communication, it lacks central
control; and while each of the robots carries a unique ID, none of them
has information about the actual range of labels. How can we get the group
into an organized arrangement: an equally spaced array between the positions of the robots
with minimum and maximum label? 
(See Figure~\ref{fig:arraying} for an example with 30 robots.)
Not only does this demand dealing
with the possibly complicated geometric arrangement in a distributed fashion; 
it also involves sorting them by label, which already requires $\Omega(n\log n)$
in a centralized setting. Furthermore, what are the achievable
time until completion, required communication, and distance traveled?

Arranging robots in a specific order is a necessary routine in some
applications on multi-robot systems. In addition to scenarios after deployment
or perturbation by an uncontrollable event, robots may need to be ordered to go 
through a narrow passage, or to perform sequential procedures. 
For homogeneous robots, swapping tasks can solve this problem in some
applications, but sorting is required if robots have different structures, are
carrying different physical loads that cannot be swapped (and may need to
arrive in order), or differ in some other intrinsic quantities, such as
remaining battery level that cannot be transferred wirelessly~\cite{Litus_FallIn}.

\begin{figure}[ht]
\renewcommand{\figheight}{1.6in}
\centering
\includegraphics[width=0.9\linewidth]{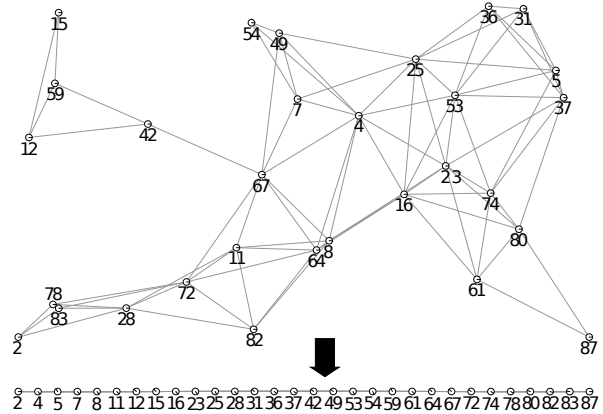}
\caption{
\label{fig:arraying}
An example of arraying a scattered swarm of robots. Thirty robots are initially randomly distributed in space, 
and eventually form an evenly spaced, sorted linear arrangement.
}
\end{figure}

In this paper, we describe an arraying algorithm to arrange robots in a sorted
line. We prove that this algorithm has linear completion time and travel distance,
which are both optimal. The overall protocol is robust and safe for any
arbitrary initial robot configuration in Euclidean space.  We make minimal
assumptions on the robot's communication and
computation requirements, so our approach is suitable for simple robots with
limited resources.  

\subsection*{Related Work}
There has been a considerable amount of work on multi-robot
navigation for labeled robots; e.g., 
see~\cite{SvestkaO98,SanchezL02,BergO05,WagnerC11,WagnerKC12,SolSalHal14,WagnerC15}, 
who demonstrate a number of different challenges and approaches for dealing
with various aspects of the inherent difficulties.
However, there is limited previous work on sorting groups of robots
in a way that combines the requirements of two-dimensional geometry, distributed
computing, and sorting; moreover, the absence of obstacles makes the problem
easier to solve, shifting the focus to achieving optimal efficiency.
The most relevant work is of Litus and Vaughan~\cite{Litus_FallIn}, which uses
a double bracket flow to build a dynamical system to model the robot's
positions.  Running this system will drive the positions to a
sorted ordering.  This is a compact, analytical solution and has provable
properties, but it requires that the robots are initially placed among a line
parallel to some given axis. Additionally, it requires long-range sensing and
communication between robots.

A previous approach to our scenario modeled the system with a discrete model, see
Zhou, Li, and McLurkin~\cite{zhou_physical_2014}. This model does not include
the dynamic properties of the robots: they simply jump to their destination at
each time step. The authors prove convergence with matrix iteration, but also
assume a robot can communicate to its final predecessor and successor.
As it turns out, this method may get stuck if only local communication is available.

Some components of our approach make use of methods for related subproblems.
One of them is the task of straightening a chain of robots in the plane, 
based on purely local methods; this amounts to our problem for the very special
case of a communication graph that is a path, and robots are already sorted by
labels. In a considerable sequence of papers, Meyer auf der Heide et 
al.~\cite{DyniaKLH06,DyniaKHS07,HeideS08,KutylowskiH09,DegenerKKH10,DegenerKH11,DegenerKLHPW11,BrandesDKH11,Cord-Landwehr11,KempkesH11,KempkesH12,KempkesKH12,BrandesDKH13} 
studied versions of the strategy {\sc Go-To-The-Middle} (GTM), in which each
robot moves to the midpoint between its two immediate neighbors. Some of the underlying models
are based on discrete rounds, with robots performing (possibly larger) discrete moves; however,
Degener et al.~\cite{DegenerKKH10} showed that in a setting with
continuous motion and sensing, the variant {\sc Move-On-Bisector} produces
a straight, evenly spaced chain in time bounded by $O(n)$; more recently, Brandes et al.~\cite{BrandesDKH13} 
provided an analysis for continuous GTM
and also established an upper bound of $O(n)$ on the distance traveled by a robot,
and thus, the overall time.

Another subproblem is actually sorting the robots along the chain. Here the idea
is to exploit parallelism for achieving linear sorting time, thus beating
the lower bound of $\Omega(n\log n)$ for comparison-based sorting.
This has been studied in the context of a stationary array, with synchronized
rounds: a parallel variant of bubblesort called {\em odd-even sort}~\cite{habermann1972parallel}
achieves a runtime of $\Theta(n)$; see Lakshmivarahan, Dhall, and Miller~\cite{LakshmivarahanDM84} 
for a comprehensive analysis.  Note that our distributed scenario does not provide using synchronized rounds, but 
has to work in an asynchronous setting. To this end, we extend the
previous work on odd-even sort to a new, asynchronous variant which we call {\em wave sort}.

In addition to these explicit references, there are also several aspects
which are dealt with implicitly, including local navigation, connectivity,
and collision avoidance. Given the limited amount of space, we refrain from
providing a survey of these related aspects.

%

\subsection*{Contribution}
This paper presents a distributed algorithm that arranges a set of robots to a
uniformaly spaced path (implemented as doubly linked list) and sorts them based on some intrinsic
property.  In the end, the robots with minimal and maximal label are 
the beginning and end of this path, with both of them staying put throughout the protocol.
The objective is achieved in total time $\Theta(n)$, and total travel distance $\Theta(nD)$, which are both best possible
assuming small robot diameter compared to the final path length.
The message complexity is $O(n^2)$ after leader election. 
As a subroutine, we propose the distributed $\Theta(n)$ sorting algorithm ``Wave Sort'' 
that is based on Odd-Even-Sort, but works in an asynchronous manner.
In addition, we give simulation results for an implementation with up to 130 robots,
demonstrating that all components of our overall protocol are indeed linear. 

\section{Model and Assumptions}
\label{sec:ModelAndAssumptions}

We are given a swarm $R$ of $n$ robots $v_i,\ i\in\{1,\ldots,n\}$, where 
each robot has a unique ID number $ID(i)$, the total 
set of these labels is  unknown. 
In addition we are given a total order $\prec$ among them. 
The ID space need not be known in advance; for readability of this article, 
we write $ID(i) = i$ and $v_i \prec v_j$ iff $i<j$.  


Communication is possible whenever two robots are sufficiently close,
i.e., they are within communication range of each other.
Thus, the robots form the vertices of an undirected
communication graph $G=(V,E)$ with $n$ nodes, 
in which two vertices
$v_i$ and $v_j$ are connected by an edge in $E$, iff $v_i$ and $v_j$ are close
enough in order to communicate directly without utilizing
other robots. 

The diameter of the initial configuration is the maximum geometric distance
$D:=\max\{Dist_u(v)\mid u,v\in R\}$.
Let robot $v$ be a direct neighbors of robot $u$ in $G$,
then $u$ can measure the relative position of $v$.
We assume that all these measurements are accurate and timely,
unless otherwise noted. Furthermore, we assume that each robot can accurately move 
in the direction of another robot, or towards the midpoint between two visible robots.
We also assume that the time to travel a specific distance $d$ is basically linear in $d$,
as the time to reach maximum speed is small compared to travel time.



Now the overall task is to achieve a sorted, 
evenly spaced arrangement of robots
between those with lowest and highest ID-number, i.e.,
robot $v_i$ must move to position $p_1+(i-1)\times(p_n-p_1)/(n-1)$, for $i=1,\ldots,n$,
where $p_1$ and $p_n$ are the initial positions of robot $v_1$ and $v_n$, respectively.

Throughout this paper, we do not explicitly deal with message loss, as the
use of a communication protocol with acknowledgements \cite{Leon-Garcia:2003:CN:861628}
makes our algorithm robust against dropped messages.
The iteration number can be appended to avoid multiplication of messages due to lost acknowledgements.
A robot will only move if it has received all necessary messages, thus if
a robot starts to move before we received an acknowledgement from it, we know it has received the message.
Overall, delays only slow down execution, but do not harm it.



\section{Echo Waves}

The echo algorithm of Chang~\cite{chang1982echo} is a wave algorithm
that is used multiple times in this paper.
This centralized distributed algorithm allows decentralized synchronization in asynchronous settings
and safe broadcasting: if it terminates, the initiator can be sure that everyone is informed.
The initiator starts the algorithm by sending a message to all its neighbors.
A robot that receives such a message for the first time broadcasts 
the message to all its neighbors except for the sender of the message, which is saved as predecessor.
Only when robot has also received the message from all its neighbors, 
it sends the message back (echo) to its predecessor.
The algorithm terminates as soon as the initiator has received the message from all its neighbors.

One important aspect of this algorithm is that it spans a tree and each robot finishes before its predecessor.
For equal messages delays this tree equals the minimal hop tree towards the initiator.
The algorithm terminates within $O(n)$ time steps and needs $O(n)$ broadcast messages.

\section{Our Algorithm}
\label{sec:Algorithm}

Our algorithm proceeds in a total of five phases. 
The first phase is the {\bf leader election} phase, in which all robots 
identify the extremal ID numbers (Section~\ref{subsec:leader}). 
The second phase then establishes an {\bf initial path} within $G$ between 
the two extremal robots $r_{min}$ and $r_{max}$ (Section~\ref{subsec:initialpath}). 
We then {\bf straighten} (Section~\ref{subsec:straighten}) 
the central path. Integrating the remaing robots into the 
path is achieved by {\bf contracting subtrees} (Section~\ref{subsec:contraction}.)
In the overall protocol (and thus in our simulations), both processes run in parallel; we do present them as two 
different phases for better readability
As soon as all robots are on the central path and as soon as the path 
is straight enough (Section~\ref{subsec:straighten}) we move on to the fifth and last phase,
the {\bf wave sort}, which is a modification of odd-even-Sort, a synchronized, parallel variant of bubble sort. 
It requires $O(n)$ time and $O(n^2)$ messages (Section~\ref{subsec:sort}). 
Overall, the first four phases can be seen as building a robot array implemented as 
doubly linked list, while the last phase efficiently sorts the array without deforming it.


\subsection{Leader Election}
\label{subsec:leader}
%
%
%
%


We determine the robots with smallest and largest ID using a simple leader 
election protocol as described in~\cite{fokkink2013distributed},
based on the echo wave.
Initially every robots claims to be the minimal/maximal robot and starts an echo wave containing its value. 
A robot does not propagate a wave if it already knows of a better value. 
Thus, only $r_{min}$ and $r_{max}$ are able to finish their waves. 
As soon as $r_{max}$ finishes its wave, $r_{max}$ broadcasts a message to inform all robots. 
As soon as $r_{min}$ has received this message and has finished its echo wave, it initializes another echo wave.
Each robot that finishes this wave can be sure its values are final and transits to the next phase.
As the last robot to finish an echo wave is the initiator, $r_{min}$ is the last robot that enters the next phase.



The phase has a message complexity of $O(n^2)$ and needs at most $O(n)$ time steps.
At the end of this phase, no robot has moved, but the robots $r_{min}$ and $r_{max}$ with smallest and largest ID have been identified.

\subsection{Central Path}
\label{subsec:initialpath}
%
%
%
%

The goal of this phase is to establish an initial central path from $r_{min}$ to $r_{max}$ 
within $G$; see Figure~\ref{fig:arraying-01}. 
We could save work by choosing a path that already contains many robots; however, this amounts to
solving the NP-hard problem of a maximum-length path in $G$, which may still not contain all robots.
We sidestep these difficulties by pulling remaining robots to the initial central path, as described further down.
More critical is the property of the path to be free from self-intersections, as these may lead
to irresolvable collisions and blocking during straightening, even for arbitrarily small robot sizes.
We prove that this can be avoided by choosing a shortest path from $r_{min}$ to $r_{max}$ in $G$,
using the squared Euclidean distance as edge weight. 

\begin{theorem}
  The path from the root to a leaf in the 
  routing tree ${\cal T}^R$ of a Unit-Disc-Graph using 
  the squared Euclidean distances as edge weights  
  is intersection free.
\end{theorem}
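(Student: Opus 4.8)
The plan is to argue by contradiction, using that ${\cal T}^R$ is a shortest-path tree for the weights $w(uv)=|uv|^2$, where $|uv|$ denotes Euclidean distance. Writing $p(x)$ for the parent and $d(x)$ for the squared-length distance of $x$ from the root $r$, this gives two facts I will lean on: $d(x)=d(p(x))+|p(x)\,x|^2$ along tree edges, and the relaxation inequality $d(y)\le d(x)+|xy|^2$ for \emph{every} graph edge $xy\in E$. A self-intersection of the root--leaf polyline means two of its edges cross; the whole proof reduces to exhibiting, from such a crossing, a single graph edge $xy$ with $d(y)>d(x)+|xy|^2$, which contradicts the relaxation inequality and hence the optimality of ${\cal T}^R$.

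First I would fix the crossing. Suppose edges $AB$ and $CD$ of the path cross at a point $X$, with $A,B,C,D$ occurring in this order from $r$, so $A=p(B)$, $C=p(D)$ and $d(A)<d(B)<d(C)<d(D)$ (the four vertices are distinct and $\ge 1$ edge separates $B$ from $C$, since crossing edges cannot share an endpoint). The four points form a convex quadrilateral whose \emph{diagonals} are the crossing edges $AB,CD$ and whose \emph{sides} are $AC,CB,BD,DA$; because $X$ lies in the interior of both diagonals, $|XA|,|XB|\le|AB|\le 1$ and $|XC|,|XD|\le|CD|\le 1$. The idea is to reroute one of the deeper vertices $C,D$ through one of the shallower vertices $A,B$ along a side of this quadrilateral, i.e.\ to find a side $xy\in E$ certifying a shorter route to $y\in\{C,D\}$.

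The two ingredients that make this go through are exactly where the squaring matters. \textbf{Legality:} the side joining the endpoint of $AB$ nearest $X$ to the endpoint of $CD$ nearest $X$ has length at most $|AB|/2+|CD|/2\le 1$, so it is a genuine unit-disc-graph edge; note that the standard uncrossing (reversing the subpath between the two edges) replaces each diagonal by a side that may be \emph{longer} than $1$ and hence illegal, which is why no single fixed uncrossing works. \textbf{Improvement:} using $d(C)=d(A)+|AB|^2+\Lambda$ and $d(D)=d(C)+|CD|^2$ with $\Lambda=d(C)-d(B)>0$, the reroute $C\!\to\!A$ beats $d(C)$ as soon as $|AC|^2<|AB|^2$, the reroute $D\!\to\!B$ as soon as $|BD|^2<|CD|^2$, and the reroute $D\!\to\!A$ as soon as $|AD|^2<|AB|^2+|CD|^2$. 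Expanding these via the law of cosines at $X$ shows the two long diagonals are penalized \emph{quadratically}, which is precisely what lets a short side win; with plain lengths this slack can vanish.

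The main obstacle, and the technical core, is proving that at least one admissible reroute always exists, simultaneously legal and $d$-decreasing, in every crossing configuration. I would dispatch this by a short case analysis on the crossing angle $\theta=\angle AXC$. Summing the first two gains gives $(|AC|^2-|AB|^2)+(|BD|^2-|CD|^2)=-2\cos\theta\,(|XA||XC|+|XB||XD|)-2(|XA||XB|+|XC||XD|)$, which is strictly negative whenever $\cos\theta\ge 0$; then $|AC|<|AB|\le 1$ or $|BD|<|CD|\le 1$ already yields a legal improving reroute. When $\cos\theta<0$, the reroute $D\!\to\!A$ is automatically improving (its cosine term is negative), so the only remaining point is its legality $|AD|\le 1$; and the only configurations where neither earlier reroute applies force $\cos\theta$ so close to $-1$ that the rays $XA$ and $XD$ nearly coincide, giving $|AD|\approx\bigl||XA|-|XD|\bigr|\le 1$. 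Thus in every case a side $xy\in E$ violates the relaxation inequality, contradicting optimality of ${\cal T}^R$; hence the root--leaf path carries no crossing and is intersection free.
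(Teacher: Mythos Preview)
Your argument has a genuine gap in the last step. For $\cos\theta<0$ you correctly observe that the reroute $D\to A$ is $d$-improving, but you establish its \emph{legality} $|AD|\le 1$ only by the heuristic ``the only configurations where neither earlier reroute applies force $\cos\theta$ so close to $-1$ that \ldots $|AD|\approx\bigl||XA|-|XD|\bigr|\le 1$.'' That sentence is not a proof: you neither quantify ``close to $-1$'' nor bound the error in the approximation. In fact the case is vacuous---for \emph{every} non-degenerate crossing one of your first two reroutes already works. Comparing $|XB|$ with $|XC|$: if $|XC|\ge|XB|$ then $|BD|^2=|XB|^2+|XD|^2-2|XB||XD|\cos(\pi-\theta)<|XC|^2+|XD|^2+2|XC||XD|=|CD|^2$, and symmetrically if $|XC|\le|XB|$ then $|AC|^2<|AB|^2$. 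So the third reroute is never needed; but you do not show this, and the substitute argument you give for it does not stand on its own.

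The paper's proof avoids the crossing-angle analysis entirely and is much shorter. It observes that the convex quadrilateral with diagonals $ab$, $cd$ has some vertex $x$ with interior angle $\ge 90^\circ$; at that vertex the diagonal joining its two neighbours (which is one of the tree edges $ab$ or $cd$) is, by the law of cosines, longer than \emph{both} incident sides. Hence the relevant side is automatically a unit-disc edge and strictly shorter than the tree edge it bypasses, contradicting optimality of ${\cal T}^R$ in one stroke. As the paper remarks, this argument uses only that edge weights are monotone in Euclidean length, whereas your law-of-cosines computations are tied to the squared-length weights specifically.
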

\begin{proof}
Assume there is an intersection of the path's edges $(a,b)$ and $(c,d)$
with $d(a)<d(b)<d(c)<d(d)$, where $d$ refers to the distance to the root.
Since $(a,b)$ and $(c,d)$ intersect the robots $\{a,b,c,d\}$ form
a convex quadrilateral. 
There is at least one corner of a robot $x\in \{a,b,c,d\}$
with an angle $\geq 90\degree$, which implies that the 
diagonal (either $(a,b)$ or $(c,d)$) formed by its two adjacent 
robots is longer than the two quadrilateral edges at $x$. 
Hence, this diagonal can not be part of the routing tree, a contradiction. 
\end{proof}
Note that this proof in fact only requires edge weights that 
reflect the order induced by their Euclidean length.  

It is not difficult to see that the path remains crossing free throughout the rest of the algorithm.
In order to compute the path we first compute a routing tree ${\cal T}^R$ rooted at $r_{min}$
using the Chandy-Misra-Algorithm~\cite{Chandy82distributedcomputation} 
that can be implemented with $O(n)$ time and $O(n\cdot e)$ individual messages 
for a graph with $e$ edges, making use of a synchronizer~\cite{Lakshmanan:1989:EDP:65464.65479}.
Based on broadcast messages (and assuming constant effort for each), the total message complexity is $O(n^2)$.
After ${\cal T}^R$ is established, $r_{max}$ sends a message to $r_{min}$ on ${\cal T}^R$.
Each robot that receives this message forwards it to its parent in ${\cal T}^R$ and becomes part of the initial path.
As soon as the message reaches $r_{min}$, it starts an echo wave on $G$, telling the remaining robots 
that they are not on the path.
Robots which have finished that wave go over to the next phase.
As before, $r_{min}$  is the last robot that enters the next phase.

At the end of this phase, all robots know if they are on the initial path and 
if so its predecessor and successor, that is, the robots on the path emulate a doubly linked list.

\subsection{Straightening a Path}
\label{subsec:straighten}
%
%
%
%
%

\begin{figure}
\vspace*{2mm}
  \centering
  \includegraphics[width=0.4\textwidth]{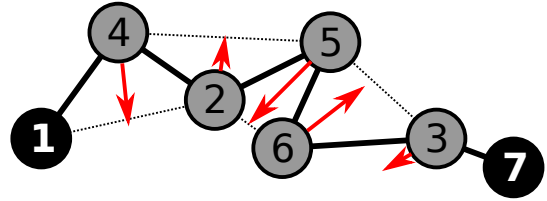}
  \caption{
    Straightening using the continuous {\sc Go-To-The-Middle} (GTM) Method. 
    Not only does it straighten the path, it also makes
    it evenly spaced, as can be seen by $r_3$. 
    In a continuous process the path gets straight.}
  \label{fig:straightening}
\end{figure}


All robots that are on the central path 
straighten the path by following the continuous GTM method,
for which it is known that the chain of robots converges to 
a straight line (up to inaccuracy of 
measurements)~\cite{DyniaKLH06,DyniaKHS07,HeideS08,KutylowskiH09,DegenerKKH10,DegenerKH11,DegenerKLHPW11,BrandesDKH11,Cord-Landwehr11,KempkesH11,KempkesH12,KempkesKH12,BrandesDKH13}. 
The rule is simple: every robot moves towards the midpoint of the segment 
between its two neighbors $n_l$ and $n_r$ in the doubly linked list;
see Figure~\ref{fig:straightening}.  
In addition, robots on the path are not allowed to cross other path edges 
in order to ensure that the path remains intersection free. 

As mentioned before, this phase only described on its own for 
clearer exposition; in the overall algorithm, it runs parallel to
contraction (Section~\ref{subsec:contraction}), as well as 
sorting (Section~\ref{subsec:contraction}).
During contraction, the process evens out distances among consecutive pairs 
of roots, making space for new incoming robots that are 
integrated into the path. 
During sorting, this process ensures that robots continue 
to have sufficient space. 




\subsection{Contracting Subtrees}
\label{subsec:contraction}
This phase integrates remaining robots into the central path; see Figure~\ref{fig:contraction}.
The idea is that robots that are close to the central path 
move towards the midpoint of two consecutive robots and 
get integrated into the path as soon as they have reached 
that position. 
At the same time, these robots are the roots of contracting 
subtrees that organize robots that are further away;
see Figure~\ref{fig:arraying-02}. 
The continued straightening (Section~\ref{subsec:straighten}) 
ensures that there is sufficient space between consecutive robots. 
At the end of this phase, there is a doubly linked list that 
contains all robots with $r_{min}$ and $r_{max}$ at its ends.


\begin{figure}
\vspace*{2mm}
  \centering
  \includegraphics[width=0.4\textwidth]{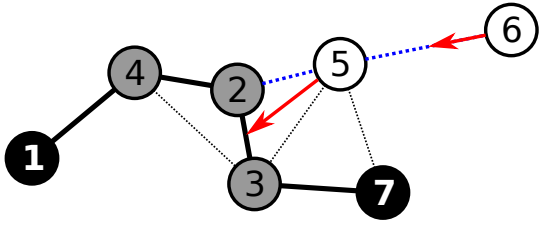}
  \caption{
    Example for contraction. Extremal robots are shown in black, 
    central path robots in grey with thick edges, and contraction tree in doted blue. 
    Robot $r_6$ is not adjacent to the central path and 
    thus moves towards its predecessor. 
    Robot $r_5$ is adjacent to the path and moves to the closest 
    midpoint of a central path edge to get integrated.}
  \label{fig:contraction}
\end{figure}

\paragraph{Contraction Tree}
The contraction tree \CT is built by inducing another routing tree rooted at $r_{min}$.
However, edges on the central path get weight zero, while all other edges in $G$ 
get their Euclidean length. 
Thus, we obtain several routing trees that are attached to the central path.  

\paragraph{Contraction}
The following protocol ensures that robots that are further away move first, 
so parents do not lose connectivity to their children, assuming star-shaped communication ranges.
In order to start a motion, $r_{min}$ induces an echo wave on \CT. 
However, only robots that have finished this wave are allowed to move.
By delaying the finalization of the wave, a child can keep its parent static until it is
ready to move.
In general, children move towards their parent, while 
child robots attached to the central path move 
towards the midpoint of the closest path edge. 
Contractive motion can be parallelized with straightening the path. 


\paragraph{Integration}
Each path edge is owned by the incident robot closer to $r_{min}$.
If an exterior robot $r_{ext}$, 
i.e., a robot that is not yet integrated into the central path yet,
is close enough to the midpoint of an edge,
the owner $r_{own}$ of the edge can send an \emph{offer}-message containing 
the ID of itself and the ID of its successor $r_{suc}$ on the central path.
Usually, $r_{ext}$ acknowledges this with an \emph{accept}-message to 
$r_{own}$ and $r_{suc}$, in which case $r_{ext}$ is integrated into the 
central path between $r_{own}$ and $r_{suc}$. 
However, in rare cases $r_{ext}$ may already have received another offer,
in which case $r_{ext}$ replies with a \emph{reject}-message.

%

\paragraph{Termination}
The contraction phase terminates as soon as all robots have been integrated into the central path, 
which is checked as follows. 
As soon as $r_{min}$ has no child, it sends an echo wave. 
However, the wave is restricted to the central path and only robots 
without any children are allowed to forward the wave. 
Therefore, all robots are integrated as soon as $r_{max}$ sends the echo. 

In addition, the echo is used to check that the path is straight enough,
i.e., the echo is only sent via path edges that are monotone 
with respect to the direction from $r_{max}$ to $r_{min}$. 
This is stable even in the presence of accumulated measurement errors and
ensures that robots have sufficient space to move during the next phase. 




\subsection{Wave Sort}
\label{subsec:sort}




As soon as the robots are arrayed on a sufficiently straight line, 
we enter the actual sorting phase.  
The idea is to use a variant of odd-even sort~\cite{habermann1972parallel,LakshmivarahanDM84}, a parallel version of bubble sort
in which alternately robots of odd and even pairs compare their labels and swap their positions if necessary. 
With global control, odd-even sort takes $O(n)$ time by carrying out
at most $n-1$ parallel rounds, each in time $O(1)$.
However, in our distributed setting with no global control, implementing 
global synchronization for each odd (even) round would take $O(n)$ time,
increasing the overall complexity to $O(n^2)$.

Instead, our new \emph{Wave Sort} implements each odd (even) round as a wave that is 
initialized at $r_{min}$. 
If an odd (even) wave reaches and odd (even) pair, they first 
propagate the signal and then start to swap if necessary. 
Thus, the frequency of waves is essentially bounded by the time required to 
swap two consecutive robots which is $O(1)$. 
For large $n$, one can literally see several waves that 
propagate through the chain of robots---see Figure~\ref{fig:arraying-03}.
Overall, the algorithm requires $O(n)$ time,
because the frequency of waves is constant.
In particular, we require at most $n-3$ waves. 

Robots $r_{min}$ and $r_{max}$ are already correctly placed. Consequently,
they will not require swaps; $r_{min}$ initializes waves (Algorithm~\ref{alg:sort:min}),
while $r_{max}$ absorbs waves (Algorithm~\ref{alg:sort:max}). 
Both algorithms are stated for completeness. 
In the following we present Algorithm~\ref{alg:sort:norm} 
for all other robots. 

\begin{algorithm}
  \begin{algorithmic}[1]
    \State $m\leftarrow \text{`Master'}$
    \While{not sorted}
    \State Wait for READY$[]$ from $n_r$
    \State Send $\operatorname{INIT}[m, \operatorname{ID}]$ to $n_r$
    \State Wait for $\operatorname{RET}[r]$
    \State $n_r \leftarrow \min\{n_r,r\}$
    \State $m\leftarrow\overline{m}$
    \EndWhile
    \State Sorted!
  \end{algorithmic}
  \caption{Wave Sort: Min-Robot}
  \label{alg:sort:min}
\end{algorithm}

\begin{algorithm}
\vspace*{2mm}
  \begin{algorithmic}[1]
    \While{not sorted}
    \State Wait for $n_l$ in range; Send READY$[]$ to $n_l$
    \State Wait for INIT$[m, l]$ from $n_l$
    \If{$m=\text{`Master'}$}
    \State Send $\operatorname{RET}[\min\{\operatorname{ID},n_r\}]$ to $n_l$
    \State Wait for READY$[]$ from $n_r$
    \State Send $\operatorname{INIT}[\overline{m}, l]$ to $n_r$
    \State Wait for $\operatorname{RET}[r]$ from $n_r$
    \If{$n_r<\operatorname{ID}$}
    \State $n_l \leftarrow n_r;\ n_r \leftarrow r$
    \State Swap with $n_r$, sidestep right
    \Else
    \State $n_l \leftarrow l$
    \EndIf
    \Else 
    \State Wait for READY$[]$ from $n_r$
    \State Send $\operatorname{INIT}[\overline{m}, \max\{\operatorname{ID}, n_l\}]$ to $n_r$
    \State Wait for $\operatorname{RET}[r]$ from $n_r$
    \State Send $\operatorname{RET}[r]$ to $n_l$
    \If{$n_l > \operatorname{ID}$}
    \State $n_r \leftarrow n_l;\ n_l \leftarrow l$
    \State Swap with $n_l$, sidestep right
    \Else
    \State $n_r \leftarrow r$
    \EndIf
    \EndIf
    \EndWhile
  \end{algorithmic}
  \caption{Wave Sort: Non-extremal-Robots}
  \label{alg:sort:norm}
\end{algorithm}

\begin{algorithm}
  \begin{algorithmic}[1]
    \While{not sorted}
    \State Wait for $n_l$ in range; Send READY$[]$ to $n_l$
    \State Wait for $\operatorname{INIT}[m, l]$ from $n_l$
    \State Send $\operatorname{RET}[\operatorname{ID}]$ to $n_l$
    \State $n_l \leftarrow \max\{n_l,l\}$
    \EndWhile
  \end{algorithmic}
  \caption{Wave Sort: Max-Robot}
  \label{alg:sort:max}
\end{algorithm}

\subsubsection{Algorithm~\ref{alg:sort:norm}}

The message exchange
ensures that all robots know their future left and right neighbors,
so that they can update the pointers of the doubly 
linked list before starting the actual swap (if necessary). 
In general, all incoming messages are stored in a buffer; i.e., 
if a robot \emph{WAITS} for a message, it checks this buffer
until it contains the message (which may already be the case).
On success the message is taken and removed from the buffer.

Consider the bottom of Figure~\ref{fig:sortingpairs}
depicting a master-slave pair and its two neighboring pairs.
The robots of these pairs may also swap. 
In the left pair, the robot that is going to end up at Position~1 
must know $min(r_c,r_d)$, which is going to be its right neighbor. 
Similarly, the right pair must know $max(r_c,r_d)$. At the same time the 
robot ending up at Position~2 must know $max(r_a,r_b)$ and the robot that will 
be at Position~3 must know $min(r_e,r_f)$. After this information is exchanged
the doubly linked list can be updated and (if necessary) 
the robots also change their physical positions 
(Algorithm~\ref{alg:sort:norm}:Line 11 and Line 22). 


\subsubsection{Example}
A detailed example is given in Figure~\ref{fig:sortingpairs}.
In the top row, robot~$r_2$ and~$r_3$ are already paired. 
The message exchange of robot~$r_7$ in detail is as follows. 
Robot~$r_7$ starts its loop by sending a \emph{READY}-message 
(A\ref{alg:sort:norm}:L2) to its left neighbor~$r_3$. 
It then waits for a \emph{INIT}-message (A\ref{alg:sort:norm}:L3) from~$r_3$,
which turns~$r_7$ into a master for this round and also indicates 
that the left pair will end the round with~$r_3$ as the robot at Position~1.
$r_7$ then answers with a \emph{RET}-message (A\ref{alg:sort:norm}:L5)
indicating the future right neighbor for~$r_3$, namely~$r_6$. 
$r_7$ then waits for a \emph{READY}-message (A\ref{alg:sort:norm}:L6) from~$r_6$
and turns it into a slave by an \emph{INIT}-message (A\ref{alg:sort:norm}:L7),
which also contains the future left neighbor of this pair, namely~$r_3$.
It then waits for~$r_6$ to finish its handshake with the right pair and 
finally receives a \emph{RET}-message (A\ref{alg:sort:norm}:L8), which  
contains the future right neighbor for $r_7$, which is~$r_4$.
As master and slave now know their future neighbors,
they update their pointers (A\ref{alg:sort:norm}:L10 and L21) 
and start the actual swap.  

\begin{figure}[tbh]
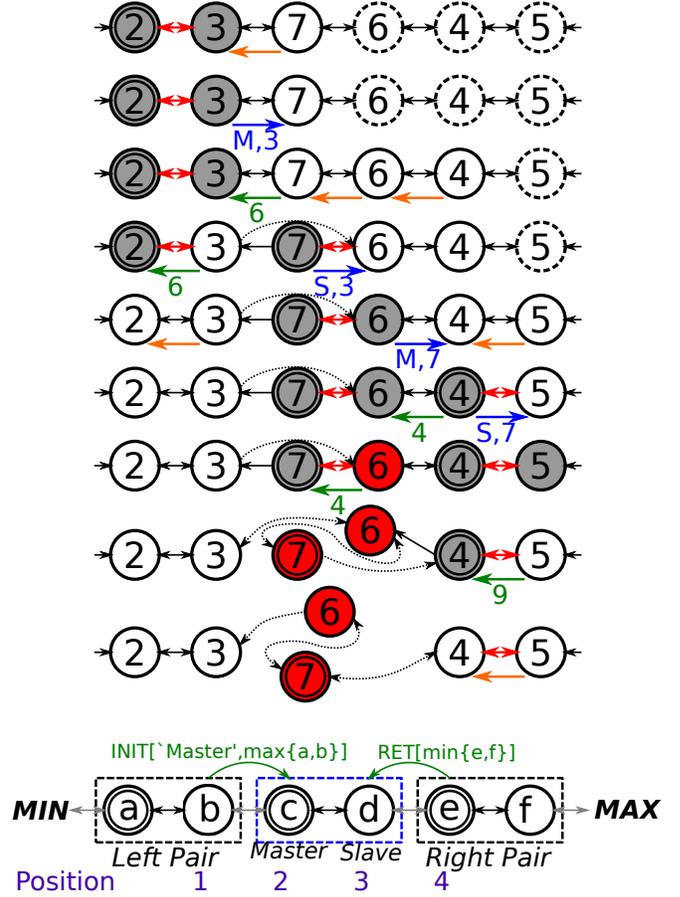

\vspace*{-2mm}
  \centering
  \includegraphics[width=0.75\columnwidth]{./figs/sortexample}\\
  \vspace{0.5cm}
  \includegraphics[width=\columnwidth]{./figs/sortingpairs}
  \caption{
    Bottom: Sketch of a master-slave pair with its two neighbors. 
    Top: A concrete example of a wave. 
    Robots of central pair $(r_7,r_6)$ swap, while 
    left and right pair are not required to swap. 
    Dotted robots are still in the previous wave. Gray robots are in the current 
    wave. Red robots swap. Dotted pointers are already updated.
    Messages: \emph{READY}---orange; \emph{INIT}---blue; \emph{RET}---green; }
  \label{fig:sortingpairs}
\end{figure}

The important part of this protocol is that the master 
of the next pair always immediately answers 
with a \emph{RET}-message (A\ref{alg:sort:norm}:L8),
as soon as it has received its \emph{INIT}-message (A\ref{alg:sort:norm}:L3).
This \emph{RET}-message already contains the future right neighbor 
of the current pair. 
This information is sufficient for the current pair to update its 
pointers and to start the swap while the wave continues to 
propagate to the right. 


\ignore{
\paragraph{Swapping}
Using odometry the swap process is very simple.
Without, the two new neighbors combined with the move to the middle method can be used.
However, a new neighbor might be a little bit distanced as can be seen in Figure~\ref{fig:sortexample} with 7 and 4.
To proceed the swap even if the range of a robot only reaches two positions (7 can see 5 but not 4), a token can be introduced that ensures that in a wave only one swap is performed at a time such that 7 can use 5 as anchor point and after swapping forwarding the token to 5 such that now 5 and 4 can swap.
Obviously the robot at Position~1 will stay at its Position~as it waits for a `Ready'.

\paragraph{Termination}
We know that we only need at most $n-2$ local rounds to sort the swarm, thus we can determine $n$ by simple counting first (simple in a list) and stop sorting after $n$ rounds.
We can also check if a local round that moved through the swarm detected any disorder by appending a simple `sorted'-flag to the \emph{INIT}-message.
Let $s\in \mathbb{B}$ be the received sorted flag, then the sent sorted flag is $s\wedge n_l<ID<n_r$.
The maximal robot simple check this flag and if it is true it broadcasts a termination message.
}

\ignore{

We refer to the execution of a wave at a single robot as local round.
A wave marks edges to be sorted by assigning robots alternating the state 
`master' and `slave', where the selected edge lies right to the `master' and 
left to the `slave'.
Let us focus on a `master'-`slave'-pair in such a local round, e.g.~like the 
one on Position~$2$ and $3$ in Figure~\ref{fig:sortingpairs}.
We use this Figure for the further explanation without explicitly referring to it. 
We denote the left neighbor (closer to~$r_{min}$) of a robot as $n_l$ and the right (closer to~$r_{max}$) as $n_r$.
The robots~$r_{min}$ and~$r_{max}$ have each their own procedures 
(Algorithm~\ref{alg:sort:min} and Algorithm~\ref{alg:sort:max}) 
while all other robots are executing Algorithm~\ref{alg:sort:norm}.

Let's get back to our robots $c$ and $d$ in Figure~\ref{fig:sortingpairs}, which represent the general robot pair.
It is obvious to see that after the passing of the wave the robots at the positions are as follows: $1:\max\{a,b\}$, $2:\min\{c,d\}$, $3:\max\{c,d\}$, and $4:\min\{e,f\}$.
Thus the robot pair $c$, $d$ can determine the positions $B$ and $C$ but for updating the neighbor links ($n_l$, $n_r$) of the doubly linked list they also need to know the future robots on Position~$1$ and $4$.
This information is coded into the messages that synchronize the local round.
If the robots $c$ and $d$ know the future robots on the positions $1$, $2$, $3$, and $4$ they are able to update their neighborhood and possibly swap.



\paragraph{Messages}
There are three types of messages in this phase:
The first and most simple one is the \emph{READY}-message that a robot sends to $n_l$ when it is ready for the next round.
The second is the \emph{INIT}-message that a robot sends to its right neighbor to integrate it in a local round.
It contains the `Master'/`Slave' state and the result of $\max\{a,b\}$.
Last there is the \emph{RET}-message that is the answer to the \emph{INIT}-message and contains the last missing information $\min\{e,f\}$.

All incoming messages are handled in a queue and if a robot waits for a message type, it checks this queue until it contains it (can already be the case).
On success the message is taken and removed from the queue.

\paragraph{Procedure of non-extremal robots (Algorithm~\ref{alg:sort:norm})}
While the robot array is not yet sorted (detection later), a robot tries to perform local rounds in form of a loop iteration.
First it needs to wait for $n_l$ and inform it that it can induce the next round.
Shortly afterwards, the left neighbor probably send an \emph{INIT}-message with the `Master'/`Slave'-state and the $\min\{a,b\}$ information.

If the robot is of type `master', it forms a pair with $n_r$ and is allowed to swap with it.
It swaps with it if its id is smaller and stays at Position~2 otherwise.
As it can already do the id comparison, it can give an immediate \emph{RET} with the future robot's id on Position~2.
Obviously, it is the one with the smaller id.
As soon as the right neighbor is ready, it is told about the local round and the future robot at Position~1.
It will send a \emph{RET} back, containing the future robot at Position~4.
After this, the pair may swap and update its neighborhood.

A slave robot sends an \emph{INIT}-message to its right neighbor containing the information which robot takes its Position~3 in the next round (the maximal of the pair).
Then it will receive a \emph{RET} containing the future left robot of the right pair (Position~4), which it forwards to its master.
After that, the pair may swap and update its neighborhood.

\paragraph{Procedure for~$r_{min}$ (Algorithm~\ref{alg:sort:min})}
Obviously,~$r_{min}$ never has to swap. 
Its only task is to induce alternating local rounds that either sort all even or odd edges.
As always the right edge of a \emph{Master}-robot is sorted, this is done by simply alternating between \emph{Master} and \emph{Slave}.

\paragraph{Procedure for~$r_{max}$ (Algorithm~\ref{alg:sort:max})}
The maximal robot is always available and automatically answers all \emph{INIT}-messages.
This \emph{RET}-message is always the same, as the maximal robot is already on its final position.

\paragraph{Swapping}
Using odometry the swap process is very simple.
Without, the two new neighbors combined with GTM can be used.
However, a new neighbor might be a little bit distanced as can be seen in Figure~\ref{fig:sortexample} with 7 and 4.
To proceed the swap even if the range of a robot only reaches two positions (7 can see 5 but not 4), a token can be introduced that ensures that in a wave only one swap is performed at a time such that 7 can use 5 as anchor point and after swapping forwarding the token to 5 such that now 5 and 4 can swap.
Obviously the robot at Position~1 will stay at its Position~as it waits for a `Ready'.

\paragraph{Termination}
We know that we only need at most $n$ local rounds to sort the swarm, thus we can determine $n$ by simple counting first (simple in a list) and stop sorting after $n$ rounds.
We can also check if a local round that moved through the swarm detected any disorder by appending a simple `sorted'-flag to the \emph{INIT}-message.
Let $s\in \mathbb{B}$ be the received sorted flag, then the sent sorted flag is $s\wedge n_l<ID<n_r$.
The maximal robot simple check this flag and if it is true it broadcasts a termination message.

\subsubsection{Proof of correctness}
A wave behaves like a round of even-odd-sort and multiple waves to not interfere due to the \emph{READY}-messages.
This means that inconsistencies would be produced within a single wave but the links are updated correctly as discussed in the beginning with Figure~\ref{fig:sortingpairs}.
We show that a robot can pass a wave in $O(1)$ in the following and~$r_{min}$ induces $n$ waves. 
Thus, the algorithm terminates sorted in finite time, based on even-odd-sort.

\subsubsection{Complexity}
The message complexity is $O(n^2)$ as there are at most $n$ waves and during each wave a robot sends exactly one message of each type. 
A termination broadcast can be implemented with $O(n)$ messages by only using the path-graph.

We now prove that a loop iteration (wave passing) can be executed in $O(1)$, which lead to an overall sorting complexity of $O(n)$ as there are only $n$ waves.
Swapping and sending can be executed in constant time and a loop iteration consists of $O(1)$ steps.
Thus, only the `wait for'-steps are critical and need to be proven to be in $O(1)$.
However, a waiting for a message only depends on at most the next two robot, as can be seen in Figure~\ref{fig:sortexample}.
If these robots are ready, our robot can get to the next round in $O(1)$.
As the same is valid for the next two robots, these robots are ready in $O(1)$ plus some constant delay too and our robot can perform the next wave.
As~$r_{min}$ is inducing a new wave as soon as its $n_l$ is ready, we can assume the next wave arriving in $O(1)$ too.

\todo{Not the best proof but my other proofs are worse or failed at some point, making them partly as worse as this one but much longer}
%

}

%
%

\section{Simulation}

We validated our approach by conducting simulations with 
robot swarms of various size. 
As we consider continuous sensing and motion, we assume that 
sensing and motion errors even out.
Robots are simulated as disks, i.e., they may collide. 
The used parameters are inspired by those of the actual r-one robots of Rice University~\cite{rone}:
robot radius is $0.05m$, 
communication range is $4.5m$,
maximal velocity is $1m/s$ and 
maximal acceleration is 1.8$m/s^2$.
A robot can perform around $1.6$ $360\degree$ rotations per second, 
which are necessary to change the direction, as the robots can only drive for- and backwards.
Robots are simulated at  $60hz$ and messages are received within the next time step, i.e., 
after $1/60 s$.

The experiments where made for n=15,...,130 robots.
They were randomly placed into a quadliteral of length 0.4*n and height 12.
The minimal robot was placed at the lower left corner and the maximal at the lower right corner;
see Figures~\ref{fig:arraying-01}-\ref{fig:arraying-03} for an overall illustration.
For collision avoidance, we used a heuristic ``retreat from neighbors closer to their goal''.
Clearly, the running time is linear.

\begin{figure}
\centering
\input{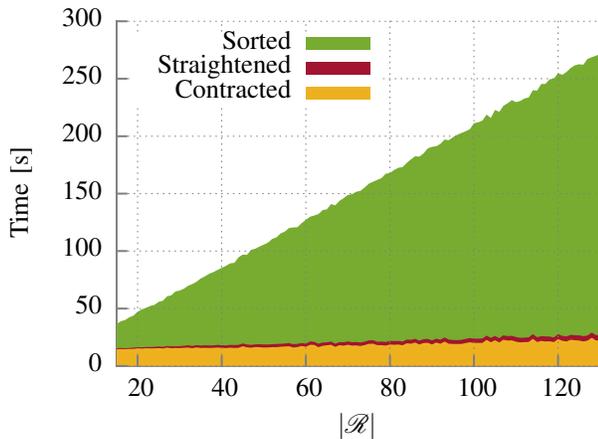}
\caption{Simulation runs for $|\mathcal{R}|=15,\ldots,130$ with 64 runs each; time includes deadlock prevention
during collision-avoiding motion control. 
Clearly, all program components have linear runtime.}
\end{figure}


It is also evident that the time for straightening the chain (the subject of numerous papers dealing with GTM)
is almost negligible, i.e., $O(n)$  with a small constant.

\section{Conclusion}
\label{sec:Conclusion}

We presented a distributed algorithm to sort robots in Euclidean space,
in overall time that is linear time after leader election. 
Robots get sorted from arbitrary initial configuration, even if sensor range is
limited and there is no central control. 
Our underlying assumption is that configurations are dense enough for a
connected communication graph, but not too dense for feasible arrangements
(i.e., available space along the final path is sufficient for accomodating all robots)
or for local collision avoidance.

There are many exciting new challenges that lie ahead. The next step is to combine 
our approach with dense settings in which density along the terminal path
is a problem, and multi-robot collision avoidance comes into play.
We plan to resolve these by moving global minimum and maximum apart
to make the problem feasible. As this generalizes the subproblem of
straightening a chain of robots by a local strategy like continuous {\sc Go-To-The-Middle}
or {\sc Move-On-Bisector},
the mathematical analysis becomes more involved, even if the overall
strategy displays benign behavior toward each other to make problem solvable.

%
%


\bibliographystyle{IEEEtran}
\bibliography{lit,mclurkin}

\begin{thebibliography}{10}
\providecommand{\url}[1]{#1}
\csname url@samestyle\endcsname
\providecommand{\newblock}{\relax}
\providecommand{\bibinfo}[2]{#2}
\providecommand{\BIBentrySTDinterwordspacing}{\spaceskip=0pt\relax}
\providecommand{\BIBentryALTinterwordstretchfactor}{4}
\providecommand{\BIBentryALTinterwordspacing}{\spaceskip=\fontdimen2\font plus
\BIBentryALTinterwordstretchfactor\fontdimen3\font minus
  \fontdimen4\font\relax}
\providecommand{\BIBforeignlanguage}[2]{{%
\expandafter\ifx\csname l@#1\endcsname\relax
\typeout{** WARNING: IEEEtran.bst: No hyphenation pattern has been}%
\typeout{** loaded for the language `#1'. Using the pattern for}%
\typeout{** the default language instead.}%
\else
\language=\csname l@#1\endcsname
\fi
#2}}
\providecommand{\BIBdecl}{\relax}
\BIBdecl

\bibitem{Litus_FallIn}
Y.~Litus and R.~Vaughan, ``Fall in! sorting a group of robots with a continuous
  controller,'' in \emph{Computer and Robot Vision (CRV), 2010 Canadian
  Conference on}, May 2010, pp. 269--276.

\bibitem{SvestkaO98}
P.~Svestka and M.~H. Overmars, ``Coordinated path planning for multiple
  robots,'' \emph{Robotics and Autonomous Systems}, vol.~23, no.~3, pp.
  125--152, 1998.

\bibitem{SanchezL02}
G.~S{\'{a}}nchez{-}Ante and J.~Latombe, ``Using a {PRM} planner to compare
  centralized and decoupled planning for multi-robot systems,'' in
  \emph{Proceedings of the 2002 {IEEE} International Conference on Robotics and
  Automation, {ICRA} 2002, May 11-15, 2002, Washington, DC, {USA}}, 2002, pp.
  2112--2119.

\bibitem{BergO05}
J.~P. van~den Berg and M.~H. Overmars, ``Prioritized motion planning for
  multiple robots,'' in \emph{2005 {IEEE/RSJ} International Conference on
  Intelligent Robots and Systems, Edmonton, Alberta, Canada, August 2-6, 2005},
  2005, pp. 430--435.

\bibitem{WagnerC11}
G.~Wagner and H.~Choset, ``M*: {A} complete multirobot path planning algorithm
  with performance bounds,'' in \emph{2011 {IEEE/RSJ} International Conference
  on Intelligent Robots and Systems, {IROS} 2011, San Francisco, CA, USA,
  September 25-30, 2011}, 2011, pp. 3260--3267.

\bibitem{WagnerKC12}
G.~Wagner, M.~Kang, and H.~Choset, ``Probabilistic path planning for multiple
  robots with subdimensional expansion,'' in \emph{{IEEE} International
  Conference on Robotics and Automation, {ICRA} 2012, 14-18 May, 2012, St.
  Paul, Minnesota, {USA}}, 2012, pp. 2886--2892.

\bibitem{SolSalHal14}
K.~Solovey, O.~Salzman, and D.~Halperin, ``Finding a needle in an exponential
  haystack: Discrete {RRT} for exploration of implicit roadmaps in multi-robot
  motion planning,'' in \emph{Workshop on the Algorithmic Foundations of
  Robotics (WAFR 2014), to appear}, 2015.

\bibitem{WagnerC15}
G.~Wagner and H.~Choset, ``Subdimensional expansion for multirobot path
  planning,'' \emph{Artif. Intell.}, vol. 219, pp. 1--24, 2015.

\bibitem{zhou_physical_2014}
Y.~Zhou, H.~Li, and J.~McLurkin, ``Physical bubblesort,'' 2014, poster
  presented at International Symposium on Distributed Autonomous Robotic
  Systems (DARS), November 2--5, Daejeon Convention Center, Daejeon, Korea.

\bibitem{DyniaKLH06}
M.~Dynia, J.~Kutylowski, P.~Lorek, and F.~{Meyer auf der Heide}, ``Maintaining
  communication between an explorer and a base station,'' in \emph{Biologically
  Inspired Cooperative Computing, {IFIP} 19th World Computer Congress, {TC} 10:
  1st {IFIP} International Conference on Biologically Inspired Computing,
  August 21-24, 2006, Santiago, Chile}, 2006, pp. 137--146.

\bibitem{DyniaKHS07}
M.~Dynia, J.~Kutylowski, F.~{Meyer auf der Heide}, and J.~Schrieb, ``Local
  strategies for maintaining a chain of relay stations between an explorer and
  a base station,'' in \emph{{SPAA} 2007: Proceedings of the 19th Annual {ACM}
  Symposium on Parallelism in Algorithms and Architectures, San Diego,
  California, USA, June 9-11, 2007}, 2007, pp. 260--269.

\bibitem{HeideS08}
F.~{Meyer auf der Heide} and B.~Schneider, ``Local strategies for connecting
  stations by small robotic networks,'' in \emph{Biologically-Inspired
  Collaborative Computing - {IFIP} 20th World Computer Congress, Second {IFIP}
  {TC} 10 International Conference on Biologically-Inspired Collaborative
  Computing, September 8-9, 2008, Milano, Italy}, 2008, pp. 95--104.

\bibitem{KutylowskiH09}
J.~Kutylowski and F.~{Meyer auf der Heide}, ``Optimal strategies for
  maintaining a chain of relays between an explorer and a base camp,''
  \emph{Theor. Comput. Sci.}, vol. 410, no.~36, pp. 3391--3405, 2009.

\bibitem{DegenerKKH10}
B.~Degener, B.~Kempkes, P.~Kling, and F.~{Meyer auf der Heide}, ``A continuous,
  local strategy for constructing a short chain of mobile robots,'' in
  \emph{Structural Information and Communication Complexity, 17th International
  Colloquium, {SIROCCO} 2010, Sirince, Turkey, June 7-11, 2010. Proceedings},
  2010, pp. 168--182.

\bibitem{DegenerKH11}
B.~Degener, B.~Kempkes, and F.~{Meyer auf der Heide}, ``Building simple
  formations in large societies of tiny mobile robots,'' \emph{Procedia {CS}},
  vol.~7, pp. 153--155, 2011.

\bibitem{DegenerKLHPW11}
B.~Degener, B.~Kempkes, T.~Langner, F.~{Meyer auf der Heide}, P.~Pietrzyk, and
  R.~Wattenhofer, ``A tight runtime bound for synchronous gathering of
  autonomous robots with limited visibility,'' in \emph{{SPAA} 2011:
  Proceedings of the 23rd Annual {ACM} Symposium on Parallelism in Algorithms
  and Architectures, San Jose, CA, USA, June 4-6, 2011 (Co-located with {FCRC}
  2011)}, 2011, pp. 139--148.

\bibitem{BrandesDKH11}
P.~Brandes, B.~Degener, B.~Kempkes, and F.~{Meyer auf der Heide},
  ``Energy-efficient strategies for building short chains of mobile robots
  locally,'' in \emph{Structural Information and Communication Complexity -
  18th International Colloquium, {SIROCCO} 2011, Gdansk, Poland, June 26-29,
  2011. Proceedings}, 2011, pp. 138--149.

\bibitem{Cord-Landwehr11}
A.~Cord{-}Landwehr, B.~Degener, M.~Fischer, M.~H{\"{u}}llmann, B.~Kempkes,
  A.~Klaas, P.~Kling, S.~Kurras, M.~M{\"{a}}rtens, F.~{Meyer auf der Heide},
  C.~Raupach, K.~Swierkot, D.~Warner, C.~Weddemann, and D.~Wonisch, ``A new
  approach for analyzing convergence algorithms for mobile robots,'' in
  \emph{Automata, Languages and Programming - 38th International Colloquium,
  {ICALP} 2011, Zurich, Switzerland, July 4-8, 2011, Proceedings, Part {II}},
  2011, pp. 650--661.

\bibitem{KempkesH11}
B.~Kempkes and F.~{Meyer auf der Heide}, ``Local, self-organizing strategies
  for robotic formation problems,'' in \emph{Algorithms for Sensor Systems -
  7th International Symposium on Algorithms for Sensor Systems, Wireless Ad Hoc
  Networks and Autonomous Mobile Entities, {ALGOSENSORS} 2011,
  Saarbr{\"{u}}cken, Germany, September 8-9, 2011, Revised Selected Papers},
  2011, pp. 4--12.

\bibitem{KempkesH12}
------, ``Continuous local strategies for robotic formation problems,'' in
  \emph{Experimental Algorithms - 11th International Symposium, {SEA} 2012,
  Bordeaux, France, June 7-9, 2012. Proceedings}, 2012, pp. 9--17.

\bibitem{KempkesKH12}
B.~Kempkes, P.~Kling, and F.~{Meyer auf der Heide}, ``Optimal and competitive
  runtime bounds for continuous, local gathering of mobile robots,'' in
  \emph{24th {ACM} Symposium on Parallelism in Algorithms and Architectures,
  {SPAA} '12, Pittsburgh, PA, USA, June 25-27, 2012}, 2012, pp. 18--26.

\bibitem{BrandesDKH13}
P.~Brandes, B.~Degener, B.~Kempkes, and F.~{Meyer auf der Heide},
  ``Energy-efficient strategies for building short chains of mobile robots
  locally,'' \emph{Theor. Comput. Sci.}, vol. 509, pp. 97--112, 2013.

\bibitem{habermann1972parallel}
A.~N. Habermann, ``Parallel neighbor-sort (or the glory of the induction
  principle),'' \emph{CMU Computer Science Report (available as Technical
  report AD-759 248, National Technical Information Service, US Department of
  Commerce, 5285 Port Royal Rd Sprigfield VA 22151)}, 1972.

\bibitem{LakshmivarahanDM84}
S.~Lakshmivarahan, S.~K. Dhall, and L.~L. Miller, ``Parallel sorting
  algorithms,'' \emph{Advances in Computers}, vol.~23, pp. 295--354, 1984.

\bibitem{Leon-Garcia:2003:CN:861628}
A.~Leon-Garcia and I.~Widjaja, \emph{Communication Networks}, 2nd~ed.\hskip 1em
  plus 0.5em minus 0.4em\relax New York, NY, USA: McGraw-Hill, Inc., 2004.

\bibitem{chang1982echo}
E.~J. Chang, ``Echo algorithms: Depth parallel operations on general graphs,''
  \emph{IEEE Trans. Software Eng.}, vol.~8, no.~4, pp. 391--401, 1982.

\bibitem{fokkink2013distributed}
W.~Fokkink, \emph{Distributed Algorithms: An Intuitive Approach}.\hskip 1em
  plus 0.5em minus 0.4em\relax MIT Press, 2013.

\bibitem{Chandy82distributedcomputation}
K.~M. Chandy and J.~Misra, ``Distributed computation on graphs: Shortest path
  algorithms,'' 1982.

\bibitem{Lakshmanan:1989:EDP:65464.65479}
\BIBentryALTinterwordspacing
K.~B. Lakshmanan, K.~Thulasiraman, and M.~A. Comeau, ``An efficient distributed
  protocol for finding shortest paths in networks with negative weights,''
  \emph{IEEE Trans. Softw. Eng.}, vol.~15, no.~5, pp. 639--644, May 1989.
  [Online]. Available: \url{http://dx.doi.org/10.1109/32.24713}
\BIBentrySTDinterwordspacing

\bibitem{rone}
J.~McLurkin, A.~J. Lynch, S.~Rixner, T.~W. Barr, A.~Chou, K.~Foster, and
  S.~Bilstein, ``A low-cost multi-robot system for research, teaching, and
  outreach,'' in \emph{Distributed Autonomous Robotic Systems - The 10th
  International Symposium, {DARS} 2010, Lausanne, Switzerland, November 1-3,
  2010}, 2010, pp. 597--609.

\end{thebibliography}

\end{document}